\documentclass[letterpaper, 10 pt, conference]{ieeeconf}
\IEEEoverridecommandlockouts

\let\proof\relax
\let\endproof\relax
\usepackage{amsthm}
\usepackage[utf8]{inputenc}
\usepackage{amsmath,amssymb}
\usepackage{booktabs}
\usepackage{graphicx}
\usepackage{graphicx}
\usepackage{subcaption}
\usepackage[dvipsnames]{xcolor}
\usepackage{numprint}
\usepackage{csvsimple}
\usepackage{comment}
\usepackage[linesnumbered,ruled,noend]{algorithm2e}

\usepackage{tabularx,siunitx}
\usepackage{cite}
\usepackage{rotating} 
\usepackage{hyperref}
\usepackage[pscoord]{eso-pic} 

\usepackage[font=small]{caption}  
\usepackage{multirow}
 
\newtheorem{theorem}{Theorem}
\newtheorem{problem}{Problem}

\newtheorem{proposition}{Proposition}

\newtheorem{lemma}{Lemma}
\newtheorem{definition}{Definition}

\newcommand{\reals}{\mathbb{R}}
\newcommand{\N}{\mathcal{N}}

\newcommand{\naturals}{\mathbb{N}}

\newcommand{\tkdef}{ \forall t \in \big[t_k, t_{k+1} \big)}
\newcommand{\expBelief}{\textbf{b}}

\newcommand{\px}{\mathbf{x}}
\newcommand{\pz}{\mathbf{z}}
\newcommand{\pw}{\mathbf{w}}
\newcommand{\pv}{\mathbf{v}}

\usepackage[normalem]{ulem} 

\begin{document}

\title{\LARGE \bf Continuous-Time Gaussian Belief Trees for Motion Planning
\author{Rayan Mazouz$^{1\star}$, Qi Heng Ho$^{2\star}$, Zachary N. Sunberg$^{1}$, Morteza Lahijanian$^{1}$}
\thanks{$^{1}$Authors are with the Department of Aerospace Engineering Sciences at the University of Colorado Boulder, CO, USA
     {\tt\small \{\texttt{firstname}.\texttt{lastname}\}@colorado.edu}
        \\
\phantom{.} \hspace{0.3mm} $^{2}$Author is with the  Department of Aerospace and Ocean Engineering at Virginia Tech, Blacksburg, VA, USA
    {\tt\small \{\texttt{qihengho}\}@vt.edu}
\\
$^{\star}$ Equal contributions.}
}

\maketitle

\begin{abstract}
We address sampling-based motion planning for continuous-time stochastic systems under process and measurement uncertainty, with probabilistic guarantees on safety and performance. The robot dynamics are modeled as a continuous-time linear stochastic differential equation, while sensor measurements arrive at discrete time instants. 
We derive an offline hybrid belief propagation model in which the belief evolves according to continuous-time ODEs between measurements and undergoes discrete Kalman filter update jumps at measurement times. 
To ensure safety, we introduce a belief-barrier-function-based safety checker for segment-level probabilistic verification. This enables the planner to certify safety over entire continuous trajectory segments and detect inter-sample chance-constraint violations that are missed by conventional node-based checks. Together, these components provide a principled framework for sampling-based belief planning that accounts for both continuous-time uncertainty propagation and continuous-time safety requirements. 
We integrate the method with RRT and SST planners and evaluate it across multiple benchmark environments. 
The results show that the proposed method achieves high success rates and robust enforcement of chance constraints, including in narrow-passage scenarios where discrete-time counterparts fail due to missed inter-sample unsafe behavior. 
\end{abstract}

\section{Introduction}







Sampling-based algorithms have recently emerged as powerful planning tools that can efficiently search large-dimensional spaces, thereby rapidly finding solutions to complex problems \cite{kavraki1996prm, lavelle1998rrt, est1997}.
They have enabled applications ranging from self-driving cars 
and surgical robots 
to autonomous UAVs.
While most existing planners assume access to perfect state information, real robotic systems must contend with uncertainty in both motion (e.g., from modeling errors) and sensing (e.g., from noisy measurements). This challenge is particularly pronounced when observation uncertainty makes it difficult to ensure safety while maintaining efficiency. In practice, robotic systems evolve in continuous time, yet sensors only provide discrete measurements, creating a mismatch between real-world dynamics and the simplified models often used in planning. Recent work \cite{ho2022gaussian} addresses this issue by formulating planning directly in the belief space, incorporating uncertainty into the decision-making process. However, this framework is rooted in a discrete-time setting, which forfeits guarantees when applied to the inherently continuous-time domain of real systems. In this work, we close this gap by extending sampling-based belief-space planning to \emph{continuous-time}.

Consider a drone navigating through an urban environment with narrow alleys and obstacles. The drone’s motion evolves continuously in time, while its onboard sensors (e.g., cameras, LIDAR, or GPS) provide measurements only at discrete intervals. If planning is performed solely in the discrete-time domain, safety is evaluated only at the discrete belief states $b_t$ and $b_{t+1}$. As illustrated in Fig.~\ref{fig:ct-gbt}, this is problematic since
the intermediate predicted belief may intersect with obstacles,  causing the true state distribution to overlap with obstacles even when the discrete-time checkpoints appear safe.

Decision-making under uncertainty can be described using the framework of a Partially Observable Markov Decision Process (POMDP)~\cite{kaelbling1998planning}. This is, in general, an intractable problem~\cite{papadimitriou1987complexity}.
For continuous-time systems, the belief dynamics evolve according to stochastic differential equations driven by process and observation noise, making exact solutions even more challenging.
Approximation methods for continuous state, control, and observation spaces have been proposed \cite{sunberg2018pomcpow,garg2019despotalpha}, but they do not scale well when the control space is large and they often lack 
{practical} guarantees
{for finite computation time}.

\begin{figure}[t!]
    \centering
    \includegraphics[width=0.95\linewidth]{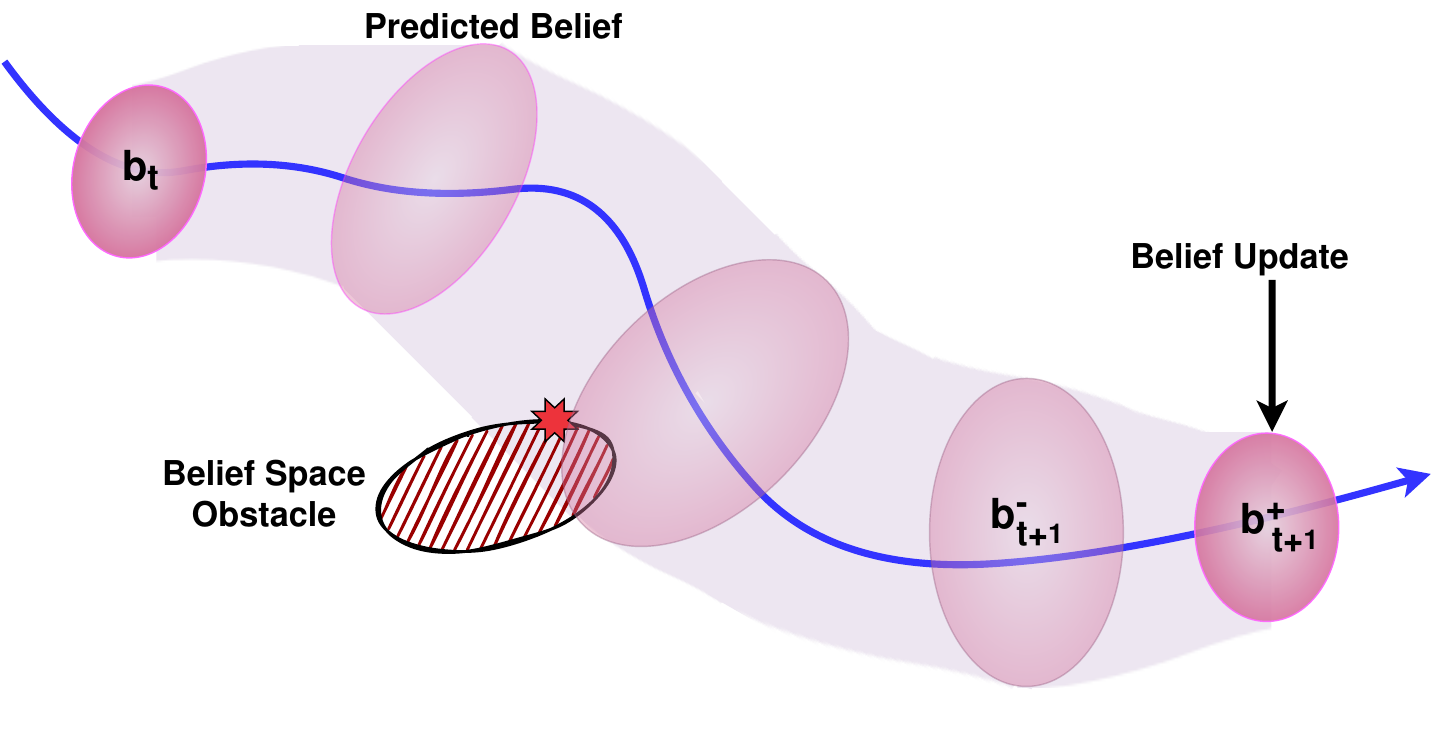}
    \caption{A pitfall in discrete-time, belief-space planning is to evaluate safety only at discrete time, when the measurements are received (i.e., at $b_t$ and $b^{+}_{t+1}$). This is insufficient because the state (and its belief) naturally grows in continuous-time between discrete-time belief updates due to process and measurement uncertainty, potentially intersecting with obstacles. 
    }
    \vspace{-5mm}
    \label{fig:ct-gbt}
\end{figure}

In recent years, sampling-based algorithms have been extended to account for motion uncertainty with probabilistic guarantees~\cite{cc-rrt,blackmore2011,Pairet:TASE:2021,burgard2008smr,Luna:AAAI:2014}. Chance-constrained tree-based planners have been shown to be effective due to their flexibility~\cite{cc-rrt,blackmore2011,Pairet:TASE:2021}. More recently, several works have extended these frameworks to explicitly address systems with measurement uncertainty~\cite{agha2014firm,Shan2017brm,Prentice2019brm, ho2022gaussian}. However, they are restricted to discrete-time dynamics and measurements. Simply reducing the discretization step significantly increases planning time, while measurement sampling rates are limited by 
hardware.


A principled method of safety analysis for dynamical systems
is \emph{barrier functions}. 
For verification, barrier certificates establish probabilistic safety guarantees in systems subject to process noise \cite{SANTOYO2021109439, mazouz2022safety, mazouz2024data, mazouz2024piecewise}. For control, control barrier approaches ensure safe operation under stochastic disturbances \cite{vahs2023belief, jagtap2020formal, mazouz2025piecewise} and explicitly account for measurement uncertainty \cite{laouar2024feasibility}. In motion planning, barriers facilitate constraint satisfaction in sampling-based algorithms \cite{yang2019sampling}. However, existing works typically address process noise, measurement noise, or deterministic constraints in isolation. To date, no approach integrates barrier functions with sampling-based continuous-time, belief-space planning.

Extending belief space planning in such settings to continuous time raises two main challenges. First, the belief dynamics are hybrid: covariance grows continuously between sensor updates and drops at each measurement via a Kalman Filter correction. Propagating this before measurements are available requires closed-form ODEs for both the estimation error covariance and the covariance of the estimate. Second, safety must hold for all $t$ in a continuous interval, not just at discrete nodes, so the collision checker must certify an entire trajectory segment.
 
We address both challenges by extending the GBT framework~\cite{ho2022gaussian}. First, we derive a hybrid belief propagation model in which the belief evolves according to continuous-time ODEs between measurement epochs and undergoes discrete Kalman filter updates at measurement times. Second, we adapt belief barrier functions~\cite{vahs2023belief} as a continuous-time collision checker for belief-space planning. Rather than verifying safety only at discrete planning nodes, the proposed checker reasons over entire continuous-time trajectory segments, thereby reducing missed inter-sample violations while remaining compatible with sampling-based search. Finally, we incorporate probabilistic control constraints into the same framework. Since the closed-loop control input depends on the uncertain state estimate, the control itself is stochastic and must also satisfy chance constraints, a formulation absent in prior GBT methods.

Specifically, we contribute a continuous-time GBT planning framework with:
\begin{itemize}
    \item a hybrid belief propagation model consisting of continuous-time ODEs for the estimation and estimate covariances, coupled with discrete Kalman filter updates at measurement epochs;
    
    \item a continuous-time probabilistic collision checker based on belief barrier functions, implemented through dense segment-level verification;
    
    \item the incorporation of probabilistic control constraints within the belief-space planning framework; and
    
    \item extensive benchmarking against discrete-time GBT variants, demonstrating the necessity and benefits of continuous-time belief propagation and segment-level safety checking.
\end{itemize}

\section{Problem Formulation}
\label{sec:problem}

We consider a motion planning problem for a robotic system subject to uncertainty in both motion and measurements, while requiring formal safety guarantees.

\subsection{Uncertain Dynamics and Measurements}

The noisy motion of the robot is described by (continuous-time) linear Stochastic Differential Equation (SDE)
\begin{equation}
        \label{eq:system}
        d\px(t) = \bigl(A \px(t) + B u(t)\bigr) dt + G\,d\pw(t),
\end{equation}
where $t \in \reals_{\geq 0}$ denotes \emph{continuous} time, $\px(t) \in \mathcal{X} \subset \mathbb{R}^{n}$ the state, 
$u(t) \in \mathcal{U} \subset \mathbb{R}^{m}$ the control input, where $\mathcal{U}$ is a compact convex polytope, and $\pw(\cdot)$ an $r$-dimensional Wiener process
with intensity matrix $Q \in \mathbb{R}^{r\times r}$. Further, $A \in \mathbb{R}^{n \times n}$, $B \in \mathbb{R}^{n \times m}$, and $G \in \mathbb{R}^{n \times r}$ are matrices with appropriate dimensions. 
We assume the initial state $\px(0)$ of the robot is uncertain and given as a Gaussian distribution with mean $\bar{x}(0) \in \mathcal{X}$ and covariance $\Sigma(0) \in \reals^{n \times n}$, i.e., $\px(0) \sim \N(\bar{x}(0), \Sigma(0))$.

In real-world settings, while the motion of the robot is continuous in time, 
its sensors provide noisy measurements only at \emph{discrete} intervals with sampling frequency $\tfrac{1}{\Delta t}$, where $\Delta t \in \mathbb{R}_{>0}$ is referred to as the discretization time.
Hence, the measurement model we consider is
\begin{align}
    \label{eq:measurement}
    \pz_k &= C \px_k + \pv_k,
\end{align}
where $\pz_k \in \mathbb{R}^p$ denotes the measurement at \emph{discrete} time step $k \in \naturals$, corresponding to continuous time $t = k \Delta t$, 
$C \in \mathbb{R}^{p \times n}$ is the measurement matrix, 
and $\pv_k$ is a zero-mean Gaussian (measurement) noise with covariance $R \in \reals^{p \times p}$, i.e., $\pv_k \sim \mathcal{N}\!\big(0,\, R\big)$. 
For clarity, we use $t_k = k \Delta t$ to denote the continuous time at time step $k$.
Also, we use variables with subscript $k$ to denote their dependence on discrete time steps and parentheses $t$ to denote continuous time, i.e., $\px_k = \px(t_k)$. Finally, we assume that matrices $A$, $B$, and $C$ are such that the system is  \emph{controllable} and \emph{observable}.

\subsection{State Evolution}
At execution time, the robot uses a Kalman Filter (KF) to obtain its state estimate $\hat{\px}_k$ and evolves under a stabilizing feedback controller 
\begin{align}
    \label{eq: feedback controller}
    \mathbf{u}(t) = \check{u}_k - K \big (\hat{\px}_k(t) - \check{x}_k(t)
    \big ), \quad k = \Big\lfloor \frac{t}{\Delta t} \Big\rfloor,
\end{align}
where $\check{u}_k \in \mathcal{U}$ is the piecewise-constant nominal feedforward control input, $K \in \reals^{m \times n}$ is the feedback gain, and $\check{x}_k(t)$ is the nominal state from the motion plan. The feedforward term $\check{u}_k$ is held constant between measurement epochs (it can only be updated when a new measurement $\mathbf{z}_k$ arrives at $t_k$), while the feedback term $K(\hat{\px}_k(t) - \check{x}_k(t))$ operates continuously on the current estimate $\hat{\px}_k(t)$.

For motion planning purposes, we use the SDE dynamics in~\eqref{eq:system} to obtain a nominal trajectory and then reason about the deviations from it under the controller in~\eqref{eq: feedback controller} using the uncertainty in both SDE and measurement models in~\eqref{eq:measurement}.
The solution of the SDE in~\eqref{eq:system} can be characterized as a continuous-time Gaussian process \cite{sarkka2019applied} with mean $\bar{x}(t)$, governed by
\begin{align}
    \label{eq: mean ODE}
    \dot{\bar{x}}(t) = A\bar{x}(t) + B\bar{u}(t), 
\end{align}
where $\bar{u}(t) = \check{u}_k$ for $k = \Big\lfloor \frac{t}{\Delta t} \Big\rfloor.$

To define a motion plan, let $\check{U} = \check{u}_0 \ldots \check{u}_{\mathcal{K}-1}$ for some $\mathcal{K} \in \naturals_{\geq 1}$, where $\check{u}_k \in \mathcal{U}$, be a control sequence (trajectory).  
By applying $\check{U}$ to \eqref{eq: mean ODE} from initial state $\bar{x}(0)$, we obtain a nominal trajectory $\check{X} = \check{x}_0 \ldots \check{x}_\mathcal{K}$ such that $\check{x}_0 = \bar{x}(0)$ and $\check{x}_{k+1} = \check{x}_k + \int_{t_k}^{t_{k+1}} (A\bar{x}(t) + B \check{u}_k) dt$. 
Then, we define a \textit{motion plan} to be the pair $(\check{U}, \check{X})$.

Given a motion plan $(\check{U}, \check{X})$, the evolution of $\px(t)$ under the controller in~\eqref{eq: feedback controller} is uncertain due to both motion and measurement uncertainty. Thus, $\px(t)$ is more appropriately represented by its \textit{belief} (i.e., probability density function) $b(t)$, i.e., $\px(t) \sim b(t)$. This belief $b(t)$ evolves as a piecewise-continuous process: $\tkdef$, it follows the SDE in~\eqref{eq:system}, while at $t = t_{k+1}$, upon receiving a new measurement, it incurs a discrete update determined by the KF. 

Let $\mathcal{X}_\mathrm{obs} = \{\mathcal{X}^o_i, \ldots, \mathcal{X}^o_N\}$, where $\mathcal{X}^o_i \subset \mathcal{X}$, denotes the set of unsafe (obstacle) regions, including workspace obstacles and out-of-bound velocities.  Each obstacle region is assumed to be a convex polytope, defined by the intersection of finitely many half-spaces, i.e.,
\begin{align}
    \label{eq:obstacles}
    \mathcal{X}^o_i
    = \bigcap_{q=1}^{n_i}
    \left\{ x \in \mathbb{R}^n \mid \alpha_{i,q}^\top x \;\geq\; \gamma_{i,q} \right\},
\end{align}
for $n_i \geq n+1$ collection of vectors $\alpha_{i,q} \in \mathbb{R}^n$ and scalars $\gamma_{i,q} \in \mathbb{R}$. 
Similarly, let $\mathcal{X}_\mathrm{goal} \subseteq \mathcal{X}$ be the goal region that the robot must reach, which is also a convex polytope. 


Similarly, the compact set $\mathcal{U}$ can be represented as the intersection of finitely many half-spaces
\begin{align}
\mathcal{U}
= \bigcap_{r=1}^{n_u}
\left\{ u \in \mathbb{R}^m \mid \beta_r^\top u \;\geq\; \eta_r \right\},
\end{align}
where $n_u \geq m+1$, with $\beta_r \in \mathbb{R}^m$ and $\eta_r \in \mathbb{R}$ defining the linear control constraints.

We are interested in finding a motion plan that avoids $\mathcal{X}_\mathrm{obs}$ and reaches $\mathcal{X}_\mathrm{goal}$.
However, since the robot state is uncertain, these requirements must be specified probabilistically.
Particularly, we aim to bound the probability of the robot lying in a set $\mathcal{X}_j$ with $j \in \{\mathrm{obs}, \mathrm{goal}\}$ at time $t \in [0,t_\mathcal{K})$, which is given by
\begin{align}
    \label{eq:state-prob}
    p\big(\px(t) \in \mathcal{X}_j\big) \;=\; \int_{\mathcal{X}_j} b(t)(y)\,dy.
\end{align}
Moreover, the closed-loop control input $\mathbf{u}(t)$ in~\eqref{eq: feedback controller} is stochastic, since it depends on the uncertain state estimate $\hat{\px}_k(t)$.  Hence, we also need to bound the probability that $\mathbf{u}$ is not admissible during execution, i.e., $p(\mathbf{u}(t) \notin \mathcal{U})$.


The problem we consider in this work is as follows.

\begin{problem}
\label{problem1}
Given a robot with noisy dynamics as SDE in~\eqref{eq:system} and measurement model in~\eqref{eq:measurement} with initial state $\px(0) \sim \mathcal{N}(\bar{x}(0), \Sigma(0))$, 
a convex polytopic control set $\mathcal{U}$,
a set of convex, polytopic unsafe regions $\mathcal{X}_{\mathrm{obs}} \subset \mathcal{X}$, a convex polytopic goal region $\mathcal{X}_{\mathrm{goal}} \subseteq \mathcal{X}$, a risk probability threshold $P_{\mathrm{safe}} \in (0,1)$, find a motion plan $(\check{U}=\check{u}_0\ldots \check{u}_{\mathcal{K}-1}, \check{X}=\check{x}_0\ldots \check{x}_\mathcal{K})$ such that
\begin{subequations}
    \label{eq:chance constraints}
    \begin{align}
        \label{eq:constraint1}
        &\!\! p\big(\px(t) \in \mathcal{X}_{\mathrm{obs}}\big) + p\big(\mathbf{u}(t) \notin \mathcal{U}\big) < P_{\mathrm{safe}},
        && \forall t \in [0, t_{\mathcal{K} }], \\
        \label{eq:constraint2}
        &\!\! p\big(\px(t) \in \mathcal{X}_{\mathrm{goal}}\big) \geq 1- P_{\mathrm{safe}} && \exists t \in [0, t_{\mathcal{K} }].
    \end{align}
\end{subequations}
\end{problem}
Note that this problem is challenging for two main reasons: (i) motion planning must be performed without realized measurements at planning time, and more importantly, (ii) the chance constraints in \eqref{eq:chance constraints} are specified over continuous time, whereas measurements and control inputs are in discrete time. Typical belief space motion planning methods are not able to natively address these challenges, as they typically perform validity-checking only at discrete nodes and lack mechanisms to ensure continuous-time chance constraints are satisfied.  
Our approach overcomes challenge~(i) by propagating an \emph{expected belief} in place of unavailable future measurements, and addresses challenge~(ii) by employing \emph{barrier-based reasoning} to certify safety along continuous trajectory segments.  


\section{Preliminaries}
\label{sec:preliminaries}

In this section, we briefly review the discrete-time offline Kalman filter formulation used in belief-space planning and Gaussian Belief Trees (GBT), a class of sampling-based planners for motion planning under uncertainty \cite{ho2022gaussian}.

\subsection{Discrete-time Offline Kalman Filter}

The Kalman Filter (KF) provides an optimal recursive state estimator under linear, Gaussian assumptions during online execution. During planning, however, measurements are not available a priori. Instead, we need to account for uncertainty in all the possible state estimates that could be realized during execution, which we call the offline KF \cite{bry2011rapidly}. 

Work \cite{bry2011rapidly} demonstrates that for a discrete-time linear system:
\begin{equation}
\label{eq:discrete}
x_{k+1} = F x_k + H u_k + w_k,
\end{equation}
paired with a linear measurement model in \eqref{eq:measurement},
we can predict the evolution of the system state's mean and covariance (the belief state) before any actual measurements are taken. Here, at time step $k$, $x_k$ is the state, $u_k$ is the control input, and $F$ and $H$ are the state transition and control matrices, respectively. The process noise $w_k$ is assumed to be i.i.d. white Gaussian noise such that $w_k \sim \mathcal{N}(0, Q_k)$.

When a stabilizing feedback controller in~\eqref{eq: feedback controller} is applied and the current 
state estimate $\hat{x}_k$ is available, the closed-loop system evolves according to
\begin{align}
   \px_{k+1} = F x_k + H \big(\check{u}_k - K(\hat{x}_k - \check{x}_k)\big) + w_k. 
\end{align} 
Since the dynamics and measurements models are linear and the initial state is Gaussian, the belief is also Gaussian. 


Let $b(\px_k \mid x_0, \mathbf{z}_{0:k})$ denote the belief over the state at time $t_k$, conditioned on the specific measurement history encountered during a single execution.
Because future measurements are unknown during planning, we work with the \emph{expected belief}, defined by marginalizing over all possible measurements~\cite{theurkauf2023chance}:
\begin{align}
    \label{eq:exp-belief}
    \expBelief(\px_k) 
    &= \mathbb{E}\!\left[\, b\!\left(\px_k \,\middle|\, x_0, \mathbf{z}_{0:k}\right) \,\right] = \int b(\textbf{x}_{k}\mid x_{0}, \textbf{z}_{0:k}) \; p(\textbf{z}_{0:k}) \; d \textbf{z}, 
\end{align}

This representation leads to two distributions that characterize the uncertainty. First, the online posterior $p(x_k \mid \hat{x}_k) = \mathcal{N}(\hat{x}_k, \Sigma_k)$ represents the robot's internal belief during execution, where $\Sigma_k$ is the estimation error covariance. Second, the distribution of state estimate $p(\hat{x}_k) = \mathcal{N}(\check{x}_k, \Lambda_k)$ characterizes the estimate itself as a random variable; from a planning perspective, the estimate $\hat{x}_k$ fluctuates around the nominal path $\check{x}_k$ with covariance $\Lambda_k$ due to sensor noise and tracking errors. This gives us the total expected belief $\expBelief(\px_k) = \mathcal{N}(\check{x}_k, \Gamma_k)$ which captures the total uncertainty of the true state from the planning perspective. The total covariance is defined as the sum of the estimation and estimate covariances, $\Gamma_k = \Sigma_k + \Lambda_k$. These terms are updated recursively at discrete time steps~\cite{bry2011rapidly}:
\begin{subequations}
\begin{align}
& \Sigma_k^{-}  = F \, \Sigma_{k-1}^{+} F^{\top} + Q_d, \label{eq:discreteKF1} \\
& L_k = \Sigma_k^{-} C^{\top} \big(C \Sigma_k^{-} C^{\top} + R\big)^{-1}, \label{eq:discreteKF2} \\
& \Sigma_k^{+} = \Sigma_k^{-} - L_k C \Sigma_k^{-}, \label{eq:discreteKF3} \\
& \Lambda_k^{+} = (F - H K)\Lambda_{k-1}^{+}(F - H K)^{\top} + L_k C \Sigma_k^{-}. \label{eq:discreteKF4} 
\end{align}
\end{subequations}

\subsection{Gaussian Belief Trees}

Gaussian Belief Trees (GBT) extend sampling-based motion planning methods such as Rapidly-exploring Random Trees (RRT) and Stable Sparse-RRT (SST) to partially observable systems by planning directly in the Gaussian belief space. GBT incrementally constructs a tree in the belief space by alternating between belief sampling, nearest-neighbor selection, belief propagation, and collision checking. At each iteration, a random Gaussian belief sample is generated and the nearest node in the tree is selected according to a belief-space distance metric such as the Wasserstein distance metric. A nominal control input is then applied for a finite duration, and the resulting belief is propagated using the system dynamics and uncertainty model. If the propagated belief satisfies the discrete-time chance constraints, the new node and trajectory segment are added to the tree. An RRT instantiation of GBT is presented in Alg.~\ref{alg:gbt}.

\section{Continuous-Time Gaussian Belief Trees}
\label{sec:CT-GBT}

Existing GBT planners are formulated for strictly discrete-time belief dynamics and perform safety verification only at discrete propagation nodes. In this section, we present our approach to Problem~\ref{problem1} by extending GBT to continuous-time systems. We present continuous-time belief propagation and continuous-time collision checking, enabling detection of inter-sample safety violations that are missed by discrete-time methods. Finally, we prove the correctness of our approach and the algorithm is probabilistically complete. 


\subsection{Algorithm}
Alg.~\ref{alg:gbt} presents our GBT-RRT planner for continuous-time belief space dynamics. Alg.~\ref{alg:gbt} has the same sampling-based tree-search structure as discrete-time GBT-RRT.
\begin{algorithm}[b!]
    \SetKwInOut{Input}{Input}
    \SetKwInOut{Output}{Output}
    \caption{\tt GBT-RRT($b_{init}, \mathcal{X}_{goal}, k, \Delta t, \mathcal{U}$)}
    \label{alg:gbt}
    \Output{Belief tree $\mathcal{T}$ rooted at $b_{init}$}
    $\mathcal{T} \gets (\mathbb{V} \leftarrow \{b_{init}\}, \ \mathbb{E} \leftarrow \emptyset)$ \\
    \For{$i \gets 1$ \KwTo $k$}{
        $b_{rand} \gets \texttt{SampleBelief}()$ \\
        $b_{near} \gets \texttt{NearestNeighbor}(b_{rand}, \mathcal{T})$ \\
        ${ \tau \gets \texttt{SampleDuration}(0, \Delta t)}$ \\
        $u \gets \texttt{SampleControlInput}(\mathcal{U})$ \\
        ${\color{blue}b_{new} \gets \texttt{PropagateBelief}(b_{near}, u, \tau)}$ \\
        \uIf{{\color{blue}\texttt{isCollisionFreeBelief}($b_{near}, b_{new}$)}}{
            $\mathbb{V} \gets \mathbb{V} \cup \{b_{new}\}$ \\
            $\mathbb{E} \gets \mathbb{E} \cup \{\texttt{edge}(b_{near}, b_{new})\}$ \\
        }
    }
    \Return $\mathcal{T} = (\mathbb{V}, \mathbb{E})$
\end{algorithm}
There are two main distinctions from its discrete-time counterpart, highlighted as follows.
\begin{itemize}

    \item Line~7: $b_{new} \gets \texttt{PropagateBelief}()$,  
    which propagates beliefs according to continuous-time dynamics under feedback control.  

    \item Line~8: \texttt{isCollisionFreeBelief}$()$,  
    which validates safety along the entire continuous trajectory segment between measurement time steps.
\end{itemize}



\subsection{Belief Propagation}

Let the expected belief be as defined in~\eqref{eq:exp-belief}. 
During offline motion planning, we use this expected belief for the evaluation of the chance constraints in~\eqref{eq:constraint1} and \eqref{eq:constraint2}. Given an initial belief $b(x_0)$, a nominal trajectory $\check{X}$, and a nominal control sequence $\check{U}$ paired with the feedback controller $\mathbf{u}(t)$ in ~\eqref{eq: feedback controller}, the expected belief is represented by $\expBelief(x(t))=\mathcal{N}(\bar{x}(t),\Gamma(t))$. 

The total covariance $\Gamma(t)=\Sigma(t)+\Lambda(t)$ comprises the internal estimation uncertainty  $\Sigma(t)$, and the covariance of the state estimate $\Lambda(t)$. The evolution of the mean and covariance components is characterized as follows.

\begin{theorem}
\label{thm: continuous dynamics}
The evolution of the expected belief $\expBelief(\px(t)) = \mathcal{N}(\bar{x}(t), \Gamma(t))$ 
for a system with continuous-time dynamics as in~\eqref{eq:system} 
is governed by the following system of ODEs:
\begin{subequations}
\label{eq:prop}
\begin{align}
\dot{\bar{x}}(t) &= A \bar{x}(t) + B \bar{u}(t), \label{eq:mean}\\
\dot{\Sigma}(t) &= A \Sigma(t) + \Sigma(t) A^\top + G Q G^\top,
\label{eq:process-cov}\\
    \dot{\Lambda}(t) &= 
    (A - BK)\Lambda(t) + \Lambda(t)(A - BK)^{\top}. \label{eq:measure-cov}
\end{align}
\end{subequations}
\end{theorem}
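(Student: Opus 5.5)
The plan is to work on one inter-measurement interval $[t_k, t_{k+1})$ and split the true state as $\px(t) = \hat{\px}_k(t) + \tilde{\px}(t)$. Here $\hat{\px}_k(t)$ is the filter's open-loop prediction of the state between measurements, obtained by propagating the last posterior through the noise-free dynamics, and $\tilde{\px}(t)$ is the estimation error. I will derive an SDE/ODE for each piece, then use the fact that the two pieces are orthogonal. That orthogonality is a standard Kalman filter property: the estimate is a function of the measurements $\pz_{0:k}$, and the error is uncorrelated with it. Once this is in place, the total covariance is additive, $\Gamma(t) = \Sigma(t) + \Lambda(t)$, and the expected belief $\expBelief(\px(t))$ defined in~\eqref{eq:exp-belief} is Gaussian. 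Gaussianity holds because every quantity involved is an affine function of the jointly Gaussian variables $\px(0)$, $\pw$ and $\pv_{0:k}$. The measurement jumps at each $t_k$ are handled separately by the discrete update~\eqref{eq:discreteKF2}--\eqref{eq:discreteKF4}, so the theorem only concerns the flow between jumps.

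\textbf{Estimate dynamics.} Between measurements the predictor obeys $\dot{\hat{x}}(t) = A\hat{x}(t) + B\mathbf{u}(t)$. Substituting the controller~\eqref{eq: feedback controller} gives $\dot{\hat{x}} = A\hat{x} + B\check{u}_k - BK(\hat{x} - \check{x})$. Taking expectations, together with the nominal relation~\eqref{eq: mean ODE} and $\mathbb{E}[\hat{x}] = \check{x}$, gives~\eqref{eq:mean}; here I identify $\bar{x} = \check{x}$ on the interval. Subtracting the mean equation, the deviation $e(t) = \hat{x}(t) - \bar{x}(t)$ satisfies the deterministic linear ODE $\dot e = (A - BK)e$, with no noise term, because no measurement enters between epochs. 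Differentiating $\Lambda = \mathbb{E}[e e^\top]$ then gives~\eqref{eq:measure-cov} directly.

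\textbf{Error dynamics.} The error satisfies $d\tilde{\px} = A\tilde{\px}\,dt + G\,d\pw$. The control terms cancel exactly, since the filter applies the same input that drives the plant. Applying Itô's lemma to $\tilde{\px}\tilde{\px}^\top$ and taking expectations gives the Lyapunov equation~\eqref{eq:process-cov}. The $GQG^\top$ term comes from the quadratic variation $d\pw\, d\pw^\top = Q\,dt$. The cross terms vanish because $\tilde{\px}(t)$ is non-anticipating with respect to increments of $\pw$.

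\textbf{Main obstacle.} The hardest step is rigorously justifying the decoupling $\Gamma = \Sigma + \Lambda$ along the whole interval, and not only at $t_k$. Concretely, I need $\mathbb{E}[e(t)\tilde{\px}(t)^\top] = 0$ for all $t \in [t_k, t_{k+1})$. My plan is to show that this cross-covariance $M(t)$ satisfies the linear ODE $\dot M = (A - BK)M + MA^\top$, which has no forcing term because $e$ is deterministic given the information at $t_k$. The initial condition is $M(t_k) = 0$ by the orthogonality principle of the discrete update. By uniqueness of solutions, $M \equiv 0$ on the interval. Together with the Gaussianity argument above, this identifies the expected belief as $\mathcal{N}(\bar{x}(t), \Sigma(t) + \Lambda(t))$ and completes the proof.
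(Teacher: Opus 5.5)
Your proof is correct and follows the paper's route. Its core step is exactly the paper's argument: between measurements the estimate's deviation from the nominal obeys the noise-free ODE $\dot e = (A-BK)e$, so $\Lambda$ satisfies the stated Lyapunov equation. Your It\^o derivation of the error covariance and your cross-covariance argument for $\Gamma = \Sigma + \Lambda$ make rigorous what the paper only attributes to ``standard moment equations'' or leaves implicit. One small fix: the identity $\mathbb{E}[\hat{\px}] = \check{x}$ that you invoke for the mean should be justified by the same uniqueness argument, since $m = \mathbb{E}[\hat{\px}] - \check{x}$ obeys $\dot m = (A-BK)m$ with $m(t_k)=0$.
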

\begin{proof}
The mean~\eqref{eq:mean} and estimation covariance dynamics ~\eqref{eq:process-cov} follow from standard
moment equations for linear SDEs. Between measurement updates, the
estimator evolves in prediction mode and no measurement correction is
applied. It remains to characterize the covariance of the estimate. Let $
\delta(t) := \hat{x}(t)-\bar{x}(t)$.
Using the prediction dynamics of the estimator and the feedback law
\[
u(t)=\bar{u}(t)-K(\hat{x}(t)-\bar{x}(t)),
\]
we obtain
\[
\dot{\delta}(t) = (A-BK)\delta(t).
\]
Since $\Lambda(t)=\mathbb{E}[\delta(t)\delta(t)^\top]$, differentiating gives
\[
\dot{\Lambda}(t)
=
(A-BK)\Lambda(t)+\Lambda(t)(A-BK)^\top.
\]
Together with the standard mean and covariance equations for the linear
SDE, this yields the stated continuous-time belief dynamics.
\end{proof}

Theorem~\ref{thm: continuous dynamics} and Equations~\eqref{eq:discreteKF2}--\eqref{eq:discreteKF4} together define the hybrid belief dynamics for $\mathbf{b}$, where $t_k^-$ and $t_k^+$ denotes the belief state immediately after/before a measurement at $t_k$:
\begin{align*}
        &\dot{\bar{x}}(t) = A\bar{x}(t) + Bu(t), && t \in (t_k,\, t_{k+1}), \\
    &\dot{\Sigma}(t) = A\Sigma(t) + \Sigma(t)A^\top + GQG^\top, && t \in (t_k,\, t_{k+1}), \\
    &\dot{\Lambda}(t) = (A{-}BK)\Lambda(t) + \Lambda(t)(A{-}BK)^\top, && t \in (t_k,\, t_{k+1}), \\
    &\bar{x}(t_k^+) = \bar{x}(t_k^-), && t = t_k, \\
    &\Sigma(t_k^+) = \Sigma(t_k^-) - L_k C\,\Sigma(t_k^-), && t = t_k, \\
    &\Lambda(t_k^+) = \Lambda(t_k^-) + L_k C\,\Sigma(t_k^-), && t = t_k, 
\end{align*}
where Kalman gain $L_k = \Sigma(t_k^-)C^\top\bigl(C\Sigma(t_k^-)C^\top + R\bigr)^{-1}$.

\subsection{Validity Checking: Probabilistic Safety Certificates}
\label{sec:BBF}

To formally guarantee safety of the motion planner, we utilize belief space barrier functions as the validity checker of Alg.~\ref{alg:gbt}. We first discuss state constraints, followed by control constraints, and finally risk allocation.

\subsubsection{State constraints}

For each obstacle $\mathcal{X}^o_i$ and candidate nominal state $\check{x}(t) \notin \mathcal{X}^o_i$, we construct a separating half-space $\mathcal{S}_i(t) = \{x \in \mathbb{R}^n \mid \alpha_i(t)^\top x \geq \gamma_i(t)\}$ such that $\check{x}(t) \in \mathcal{S}_i(t)$ and $\mathcal{S}_i(t) \cap \mathcal{X}^o_i = \emptyset$, reducing probabilistic collision avoidance to a single linear chance constraint.

Let the state-space safety specification be in the form of half-spaces $\alpha^\top \px(t) \geq \gamma$. Then, at a given time $t$, the probability of satisfying this half-space constraint is \cite{zhu2019chance}
\begin{equation}
    \Pr[\alpha^\top \px(t) \geq \gamma]
    =
    \frac{1}{2}
    \left(
    1 +
    \mathrm{erf}\left(
    \frac{{\alpha}^\top \check{x}(t) - \gamma}
    {\sqrt{2 {\alpha}^\top (\Sigma(t) + \Lambda(t)) {\alpha}}}
    \right)
    \right),
    \label{eq:ProbHalfspace}
\end{equation}
    where $\check{x}(t)$ is the mean of the expected belief (the planned nominal state), $\Sigma(t)$ and $\Lambda(t)$ are the estimation and estimate covariances and $\mathrm{erf}(\cdot)$ denotes the error function.

Given an obstacle defined by a half-space constraint with a risk budget $\delta_j$, we define a belief barrier function~\cite{blackmore2011, vahs2023belief} as follows.

\begin{proposition}
Let the expected belief $\expBelief(t)$ be described by mean $\check{x}(t)$, estimation covariance $\Sigma(t)$, and estimate covariance $\Lambda(t)$.
Then, given a risk budget $\delta_j$, a chance constraint in the belief space is defined by
\begin{equation}
\label{eq:belief_constraint}
\begin{split}
h^j_x(b(t)) = &\alpha^\top \check{x}(t) - \gamma \\
&-\mathrm{erf}^{-1}\!\left(1 - 2\delta_j\right) \sqrt{2\,\alpha^\top(\Sigma(t) + \Lambda(t))\alpha},
\end{split}
\end{equation}
where $\Sigma(t) + \Lambda(t)$ represents the total system uncertainty.
For probabilistic safety, it must hold that $h^j_x(b(t)) \;\geq\; 0$.
\end{proposition}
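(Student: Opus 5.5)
The claim to establish is that $h^j_x(b(t)) \ge 0$ implies $\Pr[\alpha^\top \px(t) < \gamma] \le \delta_j$ under the expected belief $\expBelief(t) = \mathcal{N}(\check{x}(t), \Sigma(t)+\Lambda(t))$. The argument is a direct quantile computation built on the closed-form expression \eqref{eq:ProbHalfspace}.

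First, fix $t$. By Theorem~\ref{thm: continuous dynamics} and the hybrid update equations, $\expBelief(t)$ is Gaussian with mean $\check{x}(t)$ and covariance $\Gamma(t) = \Sigma(t)+\Lambda(t)$. Hence the scalar $y := \alpha^\top \px(t)$ is Gaussian with mean $\mu := \alpha^\top \check{x}(t)$ and variance $s^2 := \alpha^\top \Gamma(t)\alpha$.

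I assume $s>0$. This holds whenever $\alpha\neq 0$ and $\Gamma(t)\succ 0$; the degenerate case $s=0$ reduces to the deterministic constraint $\mu\ge\gamma$, which $h^j_x\ge 0$ implies directly. Standardizing gives
\[
\Pr[y \ge \gamma] = \tfrac12\Bigl(1+\mathrm{erf}\bigl((\mu-\gamma)/(\sqrt{2}\,s)\bigr)\Bigr),
\]
which is exactly \eqref{eq:ProbHalfspace}.

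Second, I impose the requirement $\Pr[y\ge\gamma]\ge 1-\delta_j$ and invert it. The condition is equivalent to
\[
\mathrm{erf}\bigl((\mu-\gamma)/(\sqrt2\, s)\bigr) \ge 1-2\delta_j .
\]
Since $\mathrm{erf}$ is strictly increasing on $\reals$ and maps onto $(-1,1)$, and since $\delta_j\in(0,1)$ gives $1-2\delta_j\in(-1,1)$, we may apply $\mathrm{erf}^{-1}$ to both sides without changing the direction of the inequality. Multiplying through by $\sqrt2\,s>0$ then yields
\[
\mu-\gamma-\mathrm{erf}^{-1}(1-2\delta_j)\sqrt{2\,\alpha^\top\Gamma(t)\alpha}\ \ge 0,
\]
which is exactly $h^j_x(b(t))\ge 0$. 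Each step is an equivalence, so the barrier condition holds if and only if the half-space chance constraint holds with risk at most $\delta_j$.

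Third, I connect this back to obstacle avoidance. Because the separating half-space satisfies $\mathcal{S}_i(t)\cap\mathcal{X}^o_i=\emptyset$, we have $\{\px(t)\in\mathcal{X}^o_i\}\subseteq\{\alpha^\top\px(t)<\gamma\}$. Therefore $h^j_x\ge0$ gives $p(\px(t)\in\mathcal{X}^o_i)\le\delta_j$, and this is the sense in which the condition certifies probabilistic safety.

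The only delicate point is the monotonicity and inversion step: it requires $\delta_j\in(0,1)$, a well-defined nonzero standard deviation, and correct handling of the sign of $\mathrm{erf}^{-1}(1-2\delta_j)$. That sign is nonnegative when $\delta_j\le 1/2$, and otherwise the argument still goes through because every step is an equivalence. The rest is routine. As an additional remark rather than part of the proposition itself, the continuity of $t\mapsto h^j_x(b(t))$ on each interval $(t_k,t_{k+1})$, which follows from the ODEs \eqref{eq:prop}, is what lets this pointwise condition later be checked over whole segments.
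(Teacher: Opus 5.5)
Your proposal is correct and follows essentially the same route as the paper. The paper gives no separate proof; it justifies the proposition by inverting the closed-form half-space probability \eqref{eq:ProbHalfspace} under the Gaussian expected belief $\mathcal{N}(\check{x}(t),\Sigma(t)+\Lambda(t))$, which is exactly what you do, and your treatment of the degenerate case $\alpha^\top(\Sigma(t)+\Lambda(t))\alpha=0$ and of the separating-half-space inclusion $\mathcal{X}^o_i\subseteq\{x:\alpha^\top x<\gamma\}$ makes explicit steps the paper leaves implicit.
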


We refer to~\eqref{eq:belief_constraint} as a belief barrier function.
In the planner, this function is used as a probabilistic collision
checker by evaluating whether $h_x^j(b(t))\geq 0$ along candidate
trajectory segments.

\subsubsection{Control Constraints}
Since the control input~\eqref{eq: feedback controller} is an affine function of the Gaussian estimate deviation $\hat{\px}_k(t) - \check{x}_k(t)$, each scalar control bound is a linear constraint on a Gaussian random variable. Since $\hat{\px}_k(t) - \check{x}_k(t)$ has covariance $\Lambda(t)$, the control input $\mathbf{u}(t)$ is Gaussian with mean $\check{u}_k$ and covariance $K\Lambda(t)K^\top$. The probability of satisfying any such bound is therefore given by a similar closed-form expression as~\eqref{eq:ProbHalfspace}, with $\alpha$ and $\gamma$ replaced by the corresponding control constraint parameters $\beta$ and $\eta$, and the covariance replaced by $K\Lambda(t)K^\top$.

\begin{proposition}
Let the expected belief $\expBelief(t)$ be described by mean $\check{x}(t)$, estimation covariance $\Sigma(t)$, and estimate covariance $\Lambda(t)$, and let the control input be given by~\eqref{eq: feedback controller}.
Then, given a risk budget $\delta_j$, a control chance constraint in the belief space is defined by
\begin{equation}
\label{eq:control_belief_constraint}
h^j_u(b(t)) = \beta^\top \check{u}_k - \eta 
- \mathrm{erf}^{-1}\!\left(1 - 2\delta_j\right) \sqrt{2\,\beta^\top K\Lambda(t)K^\top\beta}.
\end{equation}
For probabilistic safety, it must hold that $h^j_u(b(t)) \geq 0$.
\end{proposition}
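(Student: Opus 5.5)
The claim to establish is this: if $h^j_u(b(t)) \geq 0$, then the scalar control half-space constraint $\beta^\top \mathbf{u}(t) \geq \eta$ holds with probability at least $1-\delta_j$. The argument mirrors the state-constraint case behind~\eqref{eq:belief_constraint}, applied to the controller~\eqref{eq: feedback controller}.

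\textbf{Step 1: the control is Gaussian.} Write $\mathbf{u}(t) = \check{u}_k - K\,\delta(t)$, where $\delta(t) = \hat{\px}_k(t) - \check{x}_k(t)$. Under the expected-belief model, the preliminaries give $\hat{\px}_k \sim \mathcal{N}(\check{x}_k, \Lambda_k)$ at the measurement epochs. Between epochs, Theorem~\ref{thm: continuous dynamics} shows that $\delta$ evolves linearly via $\dot\delta = (A-BK)\delta$, which keeps it zero-mean Gaussian with covariance $\Lambda(t)$ solving~\eqref{eq:measure-cov}. At the jumps the hybrid update $\Lambda(t_k^+) = \Lambda(t_k^-) + L_k C\Sigma(t_k^-)$ preserves Gaussianity. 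An affine map of a Gaussian is Gaussian, so $\mathbf{u}(t) \sim \mathcal{N}(\check{u}_k, K\Lambda(t)K^\top)$.

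\textbf{Step 2: reduce to a scalar Gaussian.} The projection $s(t) := \beta^\top \mathbf{u}(t)$ is a scalar Gaussian with mean $\beta^\top\check{u}_k$ and variance $\sigma^2 := \beta^\top K\Lambda(t)K^\top\beta$. Exactly as in~\eqref{eq:ProbHalfspace},
\[
\Pr[s(t) \geq \eta] = \tfrac12\Bigl(1 + \mathrm{erf}\Bigl(\tfrac{\beta^\top\check{u}_k - \eta}{\sqrt{2}\,\sigma}\Bigr)\Bigr).
\]

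\textbf{Step 3: invert.} The requirement $\Pr[s(t) \geq \eta] \geq 1-\delta_j$ is equivalent to
\[
\mathrm{erf}\bigl((\beta^\top\check{u}_k-\eta)/(\sqrt2\,\sigma)\bigr) \geq 1 - 2\delta_j.
\]
Since $\mathrm{erf}$ is strictly increasing and $\delta_j \in (0,1/2)$, applying $\mathrm{erf}^{-1}$ gives
\[
\beta^\top\check{u}_k - \eta \geq \mathrm{erf}^{-1}(1-2\delta_j)\sqrt{2}\,\sigma.
\]
Since $\sqrt{2}\,\sigma = \sqrt{2\,\beta^\top K\Lambda(t)K^\top\beta}$, this is exactly $h^j_u(b(t)) \geq 0$ in~\eqref{eq:control_belief_constraint}.

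\textbf{Degenerate and combined cases.} When $\sigma = 0$, the control is deterministic and the condition reduces to $\beta^\top\check{u}_k \geq \eta$, which is still consistent. Applying the argument to each facet $r$ of $\mathcal{U}$ with budget $\delta_r$ and taking a union bound gives $p(\mathbf{u}(t)\notin\mathcal{U}) \leq \sum_r \delta_r$. This is the form needed later for~\eqref{eq:constraint1}.

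\textbf{Main obstacle.} The delicate step is Step 1: justifying that the estimate deviation has covariance exactly $\Lambda(t)$, with zero mean, at every $t$ and not only at the epochs. This rests on the expected-belief construction and on Theorem~\ref{thm: continuous dynamics}, which I take as given. One subtlety is that the controller uses the estimate at the last epoch, $\hat{\px}_k$; here I would adopt the paper's convention that the continuously propagated estimate $\hat{\px}_k(t)$ is what is used. With that in place, the remaining steps are elementary Gaussian algebra.
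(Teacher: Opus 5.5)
Your proof is correct and follows the same route as the paper, which justifies this proposition in the paragraph just before it. There, the feedback control is an affine function of the zero-mean Gaussian estimate deviation with covariance $\Lambda(t)$, so $\mathbf{u}(t)\sim\mathcal{N}(\check{u}_k, K\Lambda(t)K^\top)$, and the half-space formula~\eqref{eq:ProbHalfspace} with $(\alpha,\gamma)$ replaced by $(\beta,\eta)$ is inverted to give $h^j_u(b(t))\geq 0$; you spell out the inversion, the Gaussianity across the Kalman jumps, and the per-facet union bound more explicitly than the paper does.
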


\subsubsection{Risk Allocation}
Let $M_x$ and $M_u$ denote the number of scalar state and control chance constraints, respectively, with $M = M_x + M_u$. Using Boole's inequality~\cite{blackmore2011}, we allocate individual risk budgets $\delta_j$ satisfying $\sum_{j=1}^{M} \delta_j \leq P_{\mathrm{safe}}$. In our implementation, we use the uniform allocation $\delta_j = P_{\mathrm{safe}} / M$. The following theorem gives a sufficient continuous-time condition under which the chance constraint remains satisfied over an entire segment.

\begin{theorem}[Sufficient condition for continuous-time probabilistic safety]
    \label{thm:safe}
    Let $\mathcal{H} = \{ h^j_x \}_{j=1}^{M_x} \cup \{ h^j_u \}_{j=1}^{M_u}$ denote the set of state and control belief barrier functions for System~\eqref{eq:system}. Suppose a control input $\mathbf{u}(t)$ with locally Lipschitz sample paths is applied. Then, if for all $h \in \mathcal{H}$ and $\tkdef$, the following holds
\begin{equation}
    \label{eq:safety-condition}
        \frac{\partial h}{\partial b}
        \dot{b}(t)
        \;\geq\; -\,h(b(t)),
\end{equation} 
where $\dot{b}(t)$ is described by~\eqref{eq:mean},~\eqref{eq:process-cov} and~\eqref{eq:measure-cov}, then the closed-loop system is probabilistically safe over continuous-time interval $[t_k, t_{k+1})$.
\end{theorem}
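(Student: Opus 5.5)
We argue one barrier at a time and then combine them with the risk allocation. Fix $h \in \mathcal{H}$ and the interval $[t_k, t_{k+1})$, and set $\phi(t) := h(b(t))$. On the open interval the expected belief $b(t) = (\check{x}(t), \Sigma(t), \Lambda(t))$ solves the linear ODEs of Theorem~\ref{thm: continuous dynamics}, so it is $C^1$. The barrier \eqref{eq:belief_constraint} (resp.\ \eqref{eq:control_belief_constraint}) is affine in the mean. Its covariance term $\sqrt{2\alpha^\top(\Sigma+\Lambda)\alpha}$ is smooth wherever $\alpha^\top \Gamma \alpha > 0$. We therefore treat $\phi$ as absolutely continuous on the interval, with
\[
\dot{\phi}(t) = \frac{\partial h}{\partial b}\dot{b}(t).
\]
The Lipschitz assumption on the sample paths of $\mathbf{u}(t)$ is used only here, to guarantee existence and uniqueness of solutions so that this chain rule is legitimate. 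Hypothesis \eqref{eq:safety-condition} then reads $\dot{\phi}(t) \ge -\phi(t)$.

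Next we apply a comparison (Gr\"onwall) argument. The function $\psi(t) = e^{t-t_k}\phi(t)$ satisfies
\[
\dot{\psi}(t) = e^{t-t_k}\bigl(\dot{\phi}(t)+\phi(t)\bigr) \ge 0,
\]
so $\psi$ is nondecreasing. Hence
\[
\phi(t) \ge e^{-(t-t_k)}\phi(t_k) \qquad \text{for all } t \in [t_k, t_{k+1}).
\]
This requires the initial condition $h(b(t_k)) \ge 0$. I would take this as the standing requirement that the parent node $b_{near}$ was certified as valid; it holds inductively along the tree, starting from a valid root. Alternatively it could be added explicitly as part of the notion of ``safe'' at the start of a segment. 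The conclusion is that $h(b(t)) \ge 0$ on the whole interval. This step is where care is needed, since the theorem as stated leaves the initial nonnegativity implicit. I would also make sure that the measurement jump at $t_k$ is handled by evaluating at $t_k^+$. The KF update reduces $\Sigma$ and increases $\Lambda$ by the same amount $L_k C \Sigma(t_k^-)$, so $\Gamma$ is unchanged and state barriers are continuous across the jump. Control barriers, however, can decrease there, because $K\Lambda K^\top$ grows. So the initial check must be done at $t_k^+$.

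Finally, we translate barrier nonnegativity into probabilities. For a state barrier, $h_x^j(b(t)) \ge 0$ rearranges to
\[
\frac{\alpha^\top\check{x}(t)-\gamma}{\sqrt{2\alpha^\top\Gamma(t)\alpha}} \ge \mathrm{erf}^{-1}(1-2\delta_j).
\]
Since $\mathrm{erf}$ is increasing, \eqref{eq:ProbHalfspace} then gives $\Pr[\alpha^\top\px(t) \ge \gamma] \ge 1-\delta_j$. The separating half-space $\mathcal{S}_i(t)$ is disjoint from $\mathcal{X}^o_i$, so $p(\px(t) \in \mathcal{X}^o_i) \le \delta_j$. The same computation, using $\mathbf{u}(t) \sim \mathcal{N}(\check{u}_k, K\Lambda(t)K^\top)$, gives $p(\beta_r^\top \mathbf{u}(t) < \eta_r) \le \delta_j$ for each control half-space. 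By Boole's inequality over all $M$ constraints,
\[
p(\px(t) \in \mathcal{X}_{\mathrm{obs}}) + p(\mathbf{u}(t) \notin \mathcal{U}) \le \sum_{j=1}^{M} \delta_j \le P_{\mathrm{safe}}
\]
for every $t$ in the interval. This is the constraint \eqref{eq:constraint1} restricted to $[t_k, t_{k+1})$; strict inequality follows from a strict barrier margin or a strict budget allocation.
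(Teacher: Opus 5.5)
Your proposal is correct and follows the same route as the paper's proof. Assume $h(b(t_k))\ge 0$, apply the comparison lemma to $\dot h + h \ge 0$ to get $h(b(t)) \ge h(b(t_k))e^{-(t-t_k)} \ge 0$ (your integrating factor $e^{t-t_k}$ is exactly that step), and combine the $M$ individual chance constraints with Boole's inequality. Your further points are valid refinements that the paper's proof leaves implicit: the explicit erf-based conversion of $h\ge 0$ into a per-constraint risk bound, the observation that the KF jump leaves $\Gamma$ unchanged but enlarges $K\Lambda K^\top$ (so control barriers must be checked at $t_k^+$), and the remark on when the strict inequality in \eqref{eq:constraint1} holds.
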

\begin{proof}
Assume that $h(b(t_k))\geq 0$. From~\eqref{eq:safety-condition},
we have
\begin{align*}
        \dot{h}(b(t)) + h(b(t)) \geq 0
    \quad \forall t\in[t_k,t_{k+1}).
\end{align*}
By the comparison lemma, this implies
\begin{align*}
        h(b(t)) \geq h(b(t_k))e^{-(t-t_k)} \geq 0,
    \quad \forall t\in[t_k,t_{k+1}).
\end{align*}
Therefore, each individual chance constraint holds for all $t\in[t_k,t_{k+1})$. Applying this argument to all $M$ barrier functions in $\mathcal{H}$ and using Boole's inequality yields the joint probabilistic safety condition.
\end{proof}

Theorem~\ref{thm:safe} establishes a sufficient condition for probabilistic
safety over the \emph{continuous-time} interval $[t_k, t_{k+1}]$. In practice, we verify~\eqref{eq:safety-condition} for each barrier function on a uniform grid with resolution $\delta t \ll \Delta t$, evaluating the condition at each intermediate point. This yields an exact certificate at the grid points and an approximate certificate between them, with worst-case inter-point error $\mathcal{O}(L^k_j\, \delta t)$, where $L^k_j$ is the Lipschitz constant of $h(b(\cdot))$ on $[t_k, t_{k+1}]$. This constant is finite, since $\check{x}(t)$, $\Sigma(t)$, and $\Lambda(t)$ satisfy linear ODEs with bounded right-hand sides on a compact interval. The resolution $\delta t$ can therefore be chosen to drive this error below any desired tolerance. The evaluation of~\eqref{eq:belief_constraint} and~\eqref{eq:control_belief_constraint} at each grid point is computationally efficient, as both the belief dynamics (Theorem~\ref{thm: continuous dynamics}) and the gradient of $h$ are available in closed form. Additionally, an exact continuous-time certificate can be recovered if the Lipschitz constants $L^k_j$ are known; we leave this extension to future work.
\subsection{Analysis}

Here, we discuss the soundness and probabilistic completeness of our algorithm for continuous-time belief space planning.
First, we equip the space of Gaussian beliefs with a suitable metric, such as the 2-Wasserstein distance. We assume that there exists a valid sequence of trajectories such that (i) the collision checker in \ref{sec:BBF} certifies the sequence as probabilistically safe, (ii) the sequence has non-zero clearance in the belief space and (iii) each trajectory segment is piecewise constant over durations supported by the duration sampler.


\begin{definition}[Collision Checker Clearance]
    Let a valid trajectory $b(t)$ be the output of a continuous map $\pi: [0,t_{\pi}] \rightarrow B$, such that $\pi(0)=b_0$ and $p(\pi(t_{\pi}) \in \mathcal{X}_{goal}) \geq 1-P_{safe}$. The clearance of $\pi$ is the maximal $\delta_{clear}$ such that, for all belief points in a ball of radius $\delta_{clear}$ centered on $\pi(t)$, $p\big(\px(t) \in \mathcal{X}_{obs}\big) + p\big(\mathbf{u}(t) \notin \mathcal{U}\big) < P_{\mathrm{safe}}$, $\forall t \in [0,t_{\pi}]$.
\end{definition}

\begin{lemma}[Correctness]
    Given System~\eqref{eq:system}, if Algorithm~\ref{alg:gbt} terminates with a motion plan, the motion plan satisfies constraints \eqref{eq:constraint1} and \eqref{eq:constraint2}.
\end{lemma}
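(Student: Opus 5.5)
The argument rests on two facts. Every edge in the returned tree has passed \texttt{isCollisionFreeBelief}. The algorithm only terminates with a plan once a node satisfying the goal chance constraint has been reached. The plan is the root-to-goal path $b_0 = b(t_0), b(t_1), \ldots, b(t_{\mathcal{K}})$, where each edge is produced by \texttt{PropagateBelief} under one nominal input $\check{u}_k$ over a duration $\tau \le \Delta t$.

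The first step shows that the beliefs the planner reasons about coincide with the true expected belief of the closed-loop system. Between measurement epochs, Theorem~\ref{thm: continuous dynamics} gives the evolution of $\bar{x}(t)$, $\Sigma(t)$ and $\Lambda(t)$. At the epochs $t_k$, the jump relations (the offline KF update) are exact for the linear Gaussian model. So by induction over edges, the node beliefs and the inter-node belief trajectory equal $\expBelief(\px(t)) = \mathcal{N}(\check{x}(t), \Sigma(t)+\Lambda(t))$ for all $t \in [0,t_{\mathcal{K}}]$. The same induction shows that $\mathbf{u}(t) \sim \mathcal{N}(\check{u}_k, K\Lambda(t)K^\top)$.

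The second step proves \eqref{eq:constraint1}. Each edge is accepted only if the barrier condition \eqref{eq:safety-condition} holds for every $h \in \mathcal{H}$ on $[t_k,t_{k+1})$, and the planner has already verified $h(b(t_k)) \ge 0$ at the parent node. The root is checked at initialization, and each child node is checked as the endpoint of its edge. Theorem~\ref{thm:safe} then gives $h(b(t)) \ge 0$ for all $h$ and all $t$ in the segment, with continuity of $h(b(\cdot))$ at the closing point $t_{k+1}$.

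Next I would unpack each barrier function through \eqref{eq:ProbHalfspace}. The condition $h^j_x \ge 0$ is equivalent to $\Pr[\alpha^\top\px(t) < \gamma] \le \delta_j$. Because the separating half-space $\mathcal{S}_i(t)$ is disjoint from $\mathcal{X}^o_i$, this bounds $p(\px(t)\in\mathcal{X}^o_i) \le \delta_j$. In the same way, $h^j_u \ge 0$ bounds the probability of violating the $r$-th face of $\mathcal{U}$ by $\delta_j$. Applying Boole's inequality over the obstacles and over the faces of $\mathcal{U}$ gives
\[
p(\px(t)\in\mathcal{X}_{\mathrm{obs}}) + p(\mathbf{u}(t)\notin\mathcal{U}) \le \sum_{j=1}^M \delta_j \le P_{\mathrm{safe}}.
\]
Getting the strict inequality in \eqref{eq:constraint1} needs one more ingredient. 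One option is strict allocation, $\sum_j\delta_j<P_{\mathrm{safe}}$. The other is to note that the Boole bound is strict whenever some covariance is nondegenerate. I would state whichever the implementation supports.

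The third step proves \eqref{eq:constraint2}. The termination condition requires the final node $b(t_{\mathcal{K}})$ to satisfy $p(\px(t_{\mathcal{K}})\in\mathcal{X}_{\mathrm{goal}}) \ge 1-P_{\mathrm{safe}}$, evaluated on the exact expected belief from the first step. This gives existence of a suitable $t$, namely $t = t_{\mathcal{K}}$.

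I expect the main obstacle to be the gap between the continuous-time certificate and what the checker actually verifies. Theorem~\ref{thm:safe} requires \eqref{eq:safety-condition} for all $t$, but the implementation checks it only on a grid of resolution $\delta t$. I would handle this in one of two ways. The first is to state correctness relative to the checker's certificate, as assumption (i) in the Analysis section does. The second is to tighten the grid test to $h \ge L^k_j\,\delta t$ using the Lipschitz bound, so that nonnegativity holds between grid points as well. A smaller secondary point is that the per-time separating half-spaces $\alpha_i(t),\gamma_i(t)$ must be chosen so that $h$ stays differentiable along the segment, since Theorem~\ref{thm:safe} differentiates $h(b(t))$.
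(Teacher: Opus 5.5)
Your proposal is correct and follows the same route as the paper. The paper's proof is just the observation that Theorem~\ref{thm:safe} certifies every segment Algorithm~\ref{alg:gbt} accepts, and you make explicit what it leaves implicit: exactness of the propagated expected belief, unpacking $h\ge 0$ via \eqref{eq:ProbHalfspace} and Boole's inequality, the goal check at termination, and the caveats about grid-based checking, the strict inequality, and time-varying half-spaces.

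One small repair is needed. For the control barriers $h^j_u$, the initial condition $h(b(t_k))\ge 0$ that Theorem~\ref{thm:safe} requires is not inherited from the parent node: $\check{u}_k$ switches at $t_k$, and $\Lambda$ jumps up by $L_kC\Sigma(t_k^-)$ at the measurement update. So it must be checked at the start of each outgoing segment, which a dense grid check that includes $t_k$ does anyway.
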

\begin{proof}
    This follows directly from Theorem~\ref{thm:safe} and the fact that Algorithm~\ref{alg:gbt} only adds nodes that are probabilistically safe throughout the continuous time trajectory.
\end{proof}

\begin{theorem}[Probabilistic Completeness w.r.t. Conservatism]
    Assume there exists a valid belief trajectory such that the collision checker in \ref{sec:BBF} certifies it as probabilistically safe. As the number of iterations $k$
    approaches $\infty$, the probability of Algorithm~\ref{alg:gbt} finding a solution approaches $1$.
\end{theorem}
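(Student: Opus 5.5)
Our plan is the standard probabilistic-completeness argument for kinodynamic RRT (in the style of Kleinbort et al.), transported to the Gaussian belief space with the 2-Wasserstein metric. Fix a valid belief trajectory $\pi$ with clearance $\delta_{clear}>0$. By assumption (iii), it is generated by a finite sequence of piecewise-constant controls $u_1,\dots,u_L$ with durations $\tau_1,\dots,\tau_L$ in the support of \texttt{SampleDuration}. Let $b_0,\dots,b_L$ be the beliefs at the segment endpoints.

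We cover $\pi$ by Wasserstein balls $\mathcal{B}_\ell$ of radius $\rho=\delta_{clear}/c$ centered at $b_\ell$, for a constant $c>1$ to be chosen. We then show by induction on $\ell$ that, with probability tending to one, the tree eventually contains a node in $\mathcal{B}_{\ell}$.

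The key technical step is a continuity estimate for \texttt{PropagateBelief}. Suppose we start from $b'\in\mathcal{B}_{\ell-1}$ and apply a control $u'$ with $\|u'-u_\ell\|\le\epsilon_u$ and duration $\tau'$ with $|\tau'-\tau_\ell|\le\epsilon_\tau$. We want the resulting hybrid trajectory of Theorem~\ref{thm: continuous dynamics} to stay within $\delta_{clear}$ of $\pi$ on the whole segment and to end in $\mathcal{B}_\ell$. The argument has three parts.

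\emph{Mean.} The mean obeys a linear ODE, so Grönwall gives $\|\bar x'(t)-\bar x(t)\|\le e^{\|A\|\tau}(\|\bar x'(0)-\bar x(0)\|+\|B\|\tau\epsilon_u)$.

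\emph{Covariances.} $\Sigma$ and $\Lambda$ obey Lyapunov ODEs, which are linear in the covariance. The Kalman jump maps $\Sigma\mapsto\Sigma-L C\Sigma$ and $\Lambda\mapsto\Lambda+LC\Sigma$ are continuous in $\Sigma$ because $R\succ0$ keeps the inverse well defined. Hence $\Gamma=\Sigma+\Lambda$ depends continuously on the initial covariances, uniformly on the compact time interval.

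\emph{Metric.} The Wasserstein distance between Gaussians is continuous in (mean, covariance). Its Bures term is continuous on the PSD cone.

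Combining these, there exist $\epsilon_u,\epsilon_\tau>0$ and $c$ such that every such propagation stays within $\delta_{clear}$ of $\pi$ and ends in $\mathcal{B}_\ell$. By the clearance definition, the whole segment then satisfies the chance constraints. We use the standing assumption that the checker certifies everything within the clearance tube (assumption (i), ``completeness w.r.t.\ conservatism''), so \texttt{isCollisionFreeBelief} accepts the edge.

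Next we need a positive per-iteration success probability. Given a node in $\mathcal{B}_{\ell-1}$, that node must be selected as $b_{near}$. Since \texttt{SampleBelief} has full support, the set of $b_{rand}$ whose nearest tree node lies in $\mathcal{B}_{\ell-1}$ has positive measure, bounded below as in the RRT Voronoi argument by using a small ball around $b_{\ell-1}$. The control set $\{\|u'-u_\ell\|\le\epsilon_u\}\cap\mathcal{U}$ has positive measure because $\mathcal{U}$ is a full-dimensional polytope, and the duration window also has positive probability. So each iteration succeeds with probability at least $p_\ell>0$, independent of the past.

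The probability of failing to extend from $\mathcal{B}_{\ell-1}$ to $\mathcal{B}_\ell$ within $m$ iterations is therefore at most $(1-p_\ell)^m$. A union bound over the $L$ stages gives failure probability at most $L(1-\min_\ell p_\ell)^{k/L}\to0$. The final node in $\mathcal{B}_L$ satisfies the goal constraint~\eqref{eq:constraint2} because goal satisfaction is robust within the clearance ball. Correctness then follows from the Correctness Lemma.

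The main obstacle is the continuity step across Kalman jumps. The number of measurement epochs in a perturbed segment can change when $\tau'$ crosses a multiple of $\Delta t$. We would handle this by noting that a positive-measure set of durations avoids such crossings: pick the duration window strictly inside a single inter-sample interval, or exactly matched to it. Within that window the number of jumps is fixed, and the composition of continuous flow maps and continuous jump maps is continuous. A secondary subtlety is that the barrier check of Theorem~\ref{thm:safe} is sufficient but not necessary. This is precisely why the statement is phrased relative to the checker's conservatism, and we invoke hypothesis (i) rather than prove it.
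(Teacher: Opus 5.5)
Your route is the paper's route made explicit. The paper's proof is a short appeal to Kleinbort et al.: it says the $W_2$ belief space is a metric space, that restricting controls to intervals is harmless, and that the planner ``eventually selects the sequence of controls that generates the valid trajectory'' (literally a probability-zero event). Rebuilding the argument, as you do, surfaces what that citation actually needs: continuity of the hybrid flow across Kalman jumps (via $R\succ 0$), the epoch-crossing discontinuity, positive-measure neighbourhoods of $(u_\ell,\tau_\ell)$, and clearance with respect to the checker rather than the true chance constraint.

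However, your ball-chaining step does not go through as written. All balls have the same radius $\rho=\delta_{clear}/c$, and you need every start in $\mathcal{B}_{\ell-1}$ to end in $\mathcal{B}_\ell$. The covariance part of \texttt{PropagateBelief} does not depend on the control, so no $\epsilon_u$ can correct a covariance deviation. The prediction flow can also expand such deviations: for the double integrator, $\Delta\Sigma\mapsto e^{At}\Delta\Sigma e^{A^\top t}$ turns a velocity-variance perturbation into a larger one with position components. Since $c$ rescales both balls alike, a boundary start in such a direction exits $\mathcal{B}_\ell$ for every $c$. In addition, the Voronoi argument does not place $b_{near}$ in $\mathcal{B}_{\ell-1}$. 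With a node in an $r$-ball and $b_{rand}$ in an $s$-ball around $b_{\ell-1}$, you only get $b_{near}$ within $r+2s$.

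The standard repair is to choose radii backwards from the goal. Let $r_L$ be a goal-robustness radius. This is an extra assumption, e.g.\ strict inequality in~\eqref{eq:constraint2} at the endpoint, because the clearance definition only covers~\eqref{eq:constraint1}. Given $r_\ell$, choose $r_{\ell-1}$ and $(\epsilon_u,\epsilon_\tau)$ so that every propagation from the $3r_{\ell-1}$-ball around $b_{\ell-1}$ stays in the $\delta_{clear}$-tube and ends in the $r_\ell$-ball. Finitely many segments keep all radii positive.

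The second gap is the metric. Your continuity estimate is in the triple $(\bar x,\Sigma,\Lambda)$, but $W_2$ between the expected beliefs $\mathcal{N}(\bar x,\Gamma)$ only controls $(\bar x,\Gamma)$. Membership in a $W_2$-ball therefore does not give the separate initial closeness of $\Sigma$ and $\Lambda$ that the estimate requires. This is not cosmetic:
\[
\dot\Gamma=A\Gamma+\Gamma A^\top+GQG^\top-BK\Lambda-\Lambda K^\top B^\top,
\]
the Kalman gain depends on $\Sigma$ alone, and $h_u$ depends on $\Lambda$ alone. Two nodes at $W_2$-distance zero can thus propagate differently, and one can pass the checker while the other fails. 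So your induction hypothesis, stated with $W_2$-balls, does not feed your continuity estimate.

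The balls and the nearest-neighbour metric must instead be a genuine metric on the triple, e.g.\ $W_2$ on $(\bar x,\Sigma)$ plus a Bures or Frobenius term in $\Lambda$. With that change and the backward radii, your continuity ingredients, the positive per-iteration probability, and the union bound give a complete argument. It is more careful than the paper's, which passes over the same point when it says the metric ``satisfies the properties of a metric space''.
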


\begin{proof}
The result follows from the probabilistic completeness of kinodynamic RRT with forward propagation and random control inputs \cite{kleinbort2018probabilistic, ho2022gaussian}. Our setting differs in two ways: (i) the state space is a Gaussian space with a distance metric, making it a Riemannian manifold, and (ii) controls are sampled over discrete intervals of length $\Delta t$. Both modifications preserve the required conditions for probabilistic completeness: the Riemannian metric satisfies the properties of a metric space, and discretizing control intervals only restricts the space of allowed controls. Assuming that there exists a valid trajectory with the same restriction,  Alg.~\ref{alg:gbt} eventually selects the sequence of controls that generates the valid trajectory. Therefore, Algorithm~\ref{alg:gbt} retains probabilistic completeness: if a valid trajectory exists, the probability of finding it approaches $1$ as $k$ approaches $\infty$.
\end{proof}


\section{Evaluation}

To evaluate our approach, we study the performance on several benchmark cases. We specifically consider two continuous-time stochastic systems:
\begin{itemize}
    \item Single integrator (2D): The state $\px(t) \in \mathbb{R}^{2}$ evolves according to the SDE
    \begin{align}
        \label{eq:bench1}
        d\px(t) = u(t)\,dt + G\,d\pw(t),
    \end{align}
    i.e., $A = 0$, $B = I_2$ in~\eqref{eq:system}, with $u(t) \in \mathbb{R}^{2}$ the velocity control input.
    \item Double integrator (4D): The state $\px(t) = [\px_1(t)^\top, \px_2(t)^\top]^\top \in \mathbb{R}^{4}$, where $\px_1 \in \mathbb{R}^2$ is position and $\px_2 \in \mathbb{R}^2$ is velocity, evolves according to
        \begin{equation}
        \label{eq:bench2}
        d\begin{bmatrix}\px_1 \\ \px_2\end{bmatrix}
        =
        \begin{bmatrix}0 & I_2 \\ 0 & 0\end{bmatrix}
        \begin{bmatrix}\px_1 \\ \px_2\end{bmatrix}dt
        +
        \begin{bmatrix}0 \\ I_2\end{bmatrix}u(t)\,dt
        + G\,d\pw(t),
        \end{equation}
where $u(t) \in \mathbb{R}^{2}$ is the acceleration input.
\end{itemize}

\begin{figure}[t!]
    \centering
    \begin{subfigure}[b]{0.23\textwidth}
        \centering
        \includegraphics[width=\textwidth]{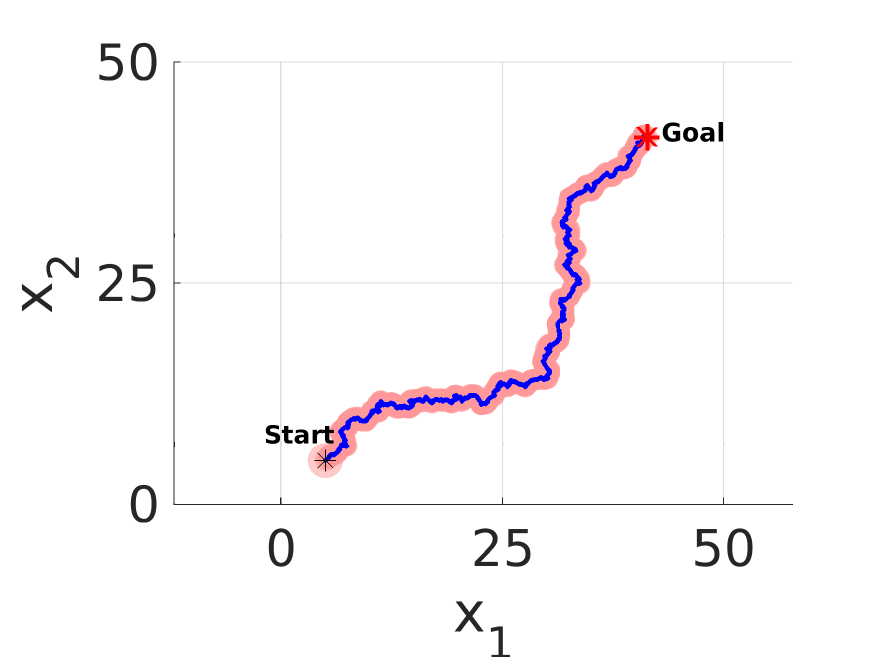}
        \caption{Environment 1}
        \label{fig:2D-env1}
    \end{subfigure}
    \begin{subfigure}[b]{0.23\textwidth}
        \centering
        \includegraphics[width=\textwidth]{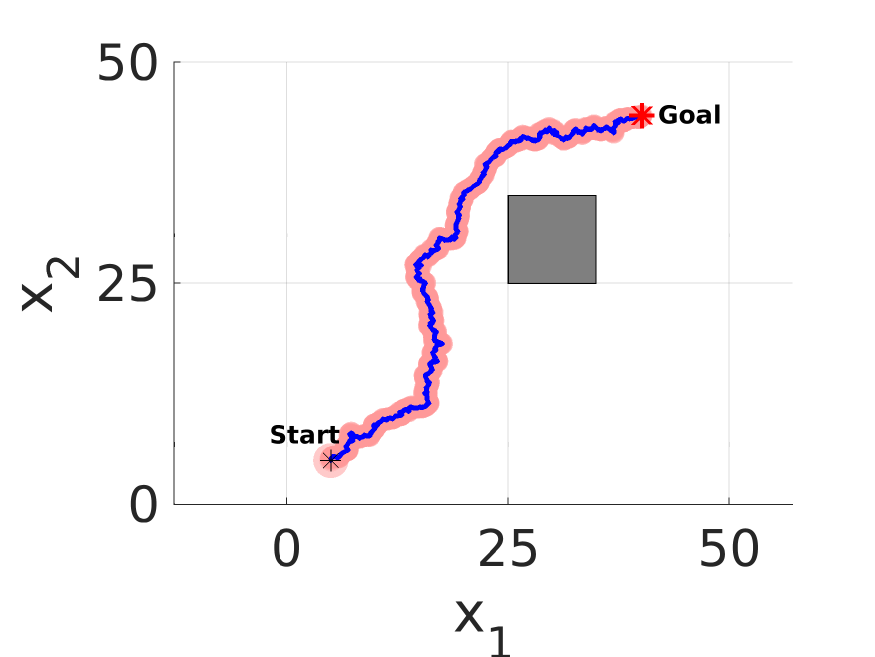}
        \caption{Environment 2}
        \label{fig:2D-env2}
    \end{subfigure}

    \begin{subfigure}[b]{0.23\textwidth}
        \centering
        \includegraphics[width=\textwidth]{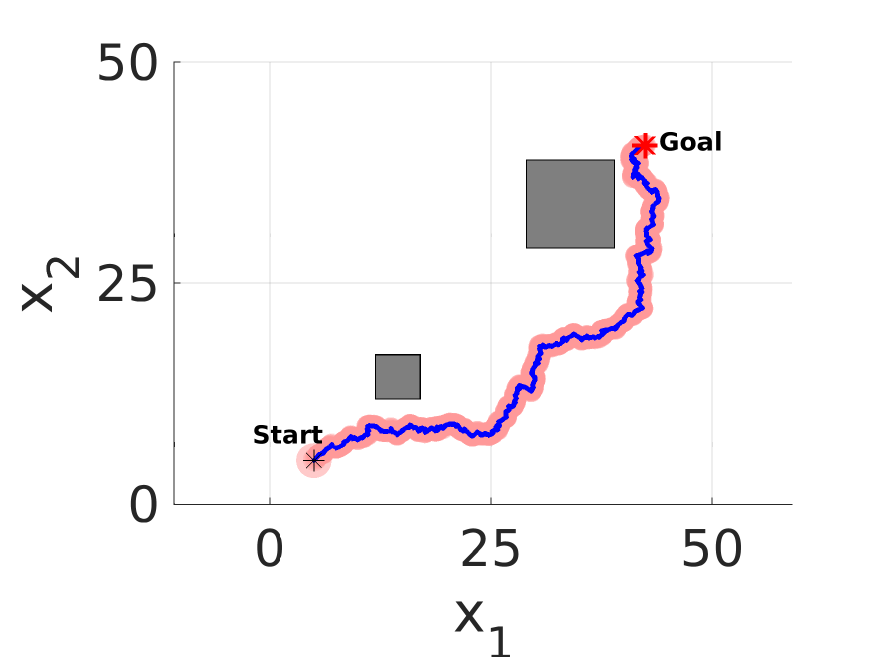}
        \caption{Environment 3}
        \label{fig:2D-env3}
    \end{subfigure}
    \begin{subfigure}[b]{0.23\textwidth}
        \centering
        \includegraphics[width=\textwidth]{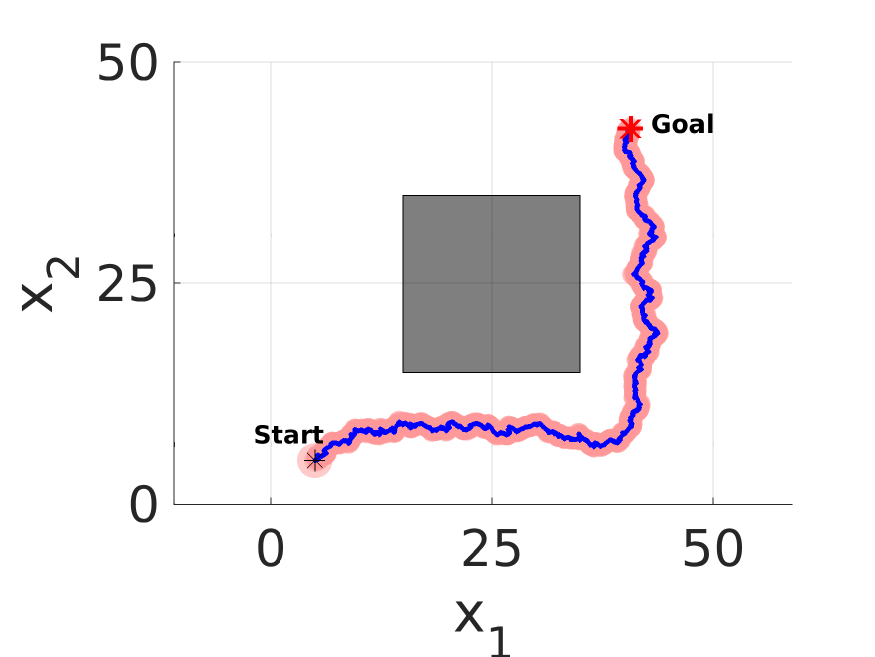}
        \caption{Environment 4}
        \label{fig:2D-env4}
    \end{subfigure}

    \begin{subfigure}[b]{0.23\textwidth}
        \centering
        \includegraphics[width=\textwidth]{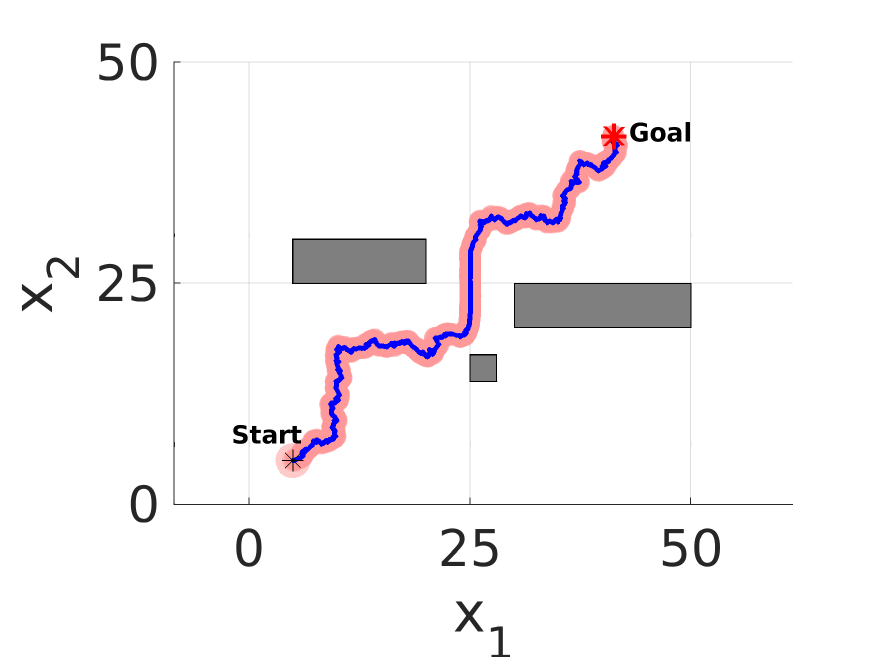}
        \caption{Environment 5}
        \label{fig:2D-env5}
    \end{subfigure}
    \begin{subfigure}[b]{0.23\textwidth}
        \centering
        \includegraphics[width=\textwidth]{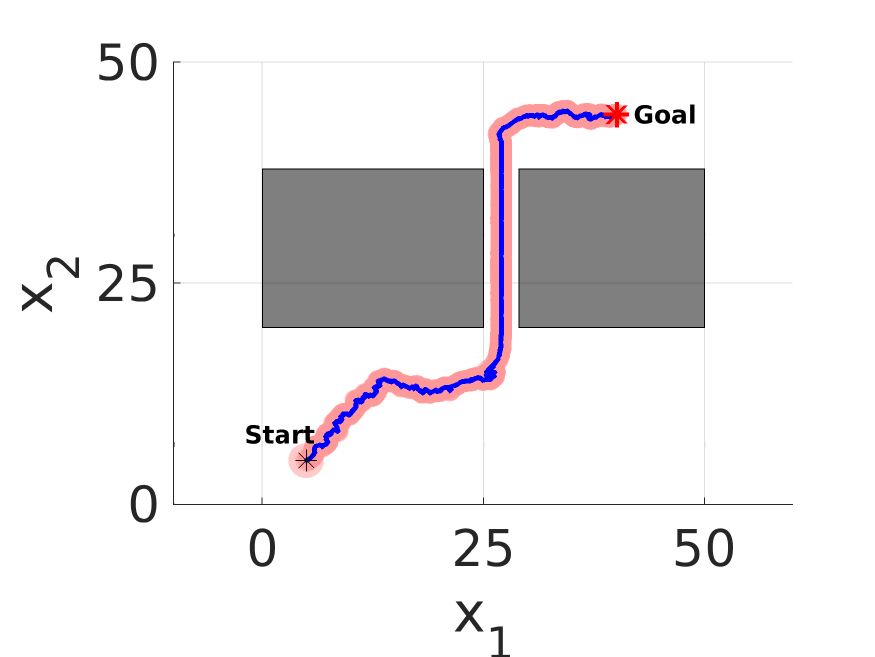}
        \caption{Environment 6}
        \label{fig:2D-env6}
    \end{subfigure}

    \caption{Continuous-time Gaussian belief space planner trajectories across six environments, where $X \in [0, 50]^2$.}
    \label{fig:2D-system}
    \vspace{-3.75mm}
\end{figure}

For both systems, the measurement model follows~\eqref{eq:measurement} with $C = I$ (full-state observation), measurement noise covariance $R = 0.1 I$, and sampling interval $\Delta t$. All the computations for the benchmarks were performed on a computer with 3.9 GHz 8-Core CPU and 128 GB of memory. We use a violation threshold of $P_{\mathrm{safe}} = 0.1$ for the experiments. 

\subsection{Continuous-Time Belief Trajectories}

Fig.~\ref{fig:2D-system} illustrates representative plans generated by the continuous-time RRT-based GBT across six benchmark environments. Here, the dynamics are governed by~\eqref{eq:bench1}, and various obstacle locations are selected to highlight the planner's ability to generate collision-free trajectories under uncertainty. Specifically, the results highlight the ability of the CT formulation to maintain safety in belief propagation while successfully navigating increasingly complex environments. For the cases in Figs.~\ref{fig:2D-env1} and \ref{fig:2D-env2}, the planner produces short, direct paths to the goal, while in more constrained scenarios (Figs.~\ref{fig:2D-env3}--\ref{fig:2D-env6}) it adapts by generating feasible but longer paths that respect the constraints and safety specifications.

\subsection{Benchmarks}

\begin{table*}[!ht]
    \centering
    \caption{\small Benchmark planner performance results for the \texttt{CT} and \texttt{DT} planners; using \texttt{RRT} and \texttt{SST}. Each entry is the mean of 100 simulations. The best (lowest) time and cost between \texttt{CT} and \texttt{DT} variants are bolded.}
    \label{tab:benchmarks}
    \begin{tabular}{l|cc|cc|cc|cc|cc|cc}
    \toprule
    & \multicolumn{2}{c|}{Env. 1} 
    & \multicolumn{2}{c|}{Env. 2} 
    & \multicolumn{2}{c|}{Env. 3}
    & \multicolumn{2}{c|}{Env. 4}
    & \multicolumn{2}{c|}{Env. 5}
    & \multicolumn{2}{c}{Env. 6}
    \\
    \texttt{Alg.}
    & Time (s) &  Cost
    & Time (s) &  Cost 
    & Time (s) &  Cost 
    & Time (s) &  Cost
    & Time (s) &  Cost
    & Time (s) &  Cost \\
    \midrule
     \multicolumn{13}{c}{\textbf{2D System}}\\
    \midrule
     CT-RRT   & 3.481 & \textbf{89.011} & \textbf{2.172} & \textbf{88.203} & \textbf{2.643} & \textbf{84.421} & 4.129 & \textbf{90.388} & 7.021 & \textbf{99.433} & \textbf{8.274} & \textbf{101.601} \\
     DT-RRT   & \textbf{2.589} & 89.731 & 3.533 & 96.712 & 2.864 & 86.918 & \textbf{3.754} & 91.422 & \textbf{6.671} & 101.781 & 9.012 & 103.941 \\
     CT-SST   & 8.042 & \textbf{80.311} & \textbf{6.341} & \textbf{83.901} & 7.612 & \textbf{81.102} & 8.401 & \textbf{79.522} & 9.488 & \textbf{85.442} & 10.941 & \textbf{88.287} \\
     DT-SST   & \textbf{6.701} & 81.598 & 7.154 & 85.377 & \textbf{6.621} & 82.412 & \textbf{7.384} & 80.944 & \textbf{9.247} & 86.052 & \textbf{10.083} & 89.812 \\
    \midrule
    
    \multicolumn{13}{c}{\textbf{Double Integrator}}\\
    \midrule
     CT-RRT   & 4.221 & 122.184 & \textbf{3.821} & \textbf{118.392} & 4.731 & \textbf{120.821} & 5.109 & \textbf{125.402} & 6.664 & \textbf{129.778} & 7.774 & \textbf{135.801} \\
     DT-RRT   & \textbf{3.471} & \textbf{124.512} & 4.378 & 121.704 & \textbf{3.984} & 123.311 & \textbf{4.621} & 127.244 & \textbf{6.089} & 133.917 & \textbf{7.281} & 137.641 \\
     CT-SST   & 8.619 & \textbf{110.012} & \textbf{7.732} & \textbf{113.447} & 8.231 & \textbf{111.633} & 9.084 & \textbf{109.401} & 10.117 & \textbf{115.912} & 11.652 & \textbf{118.511} \\
     DT-SST   & \textbf{7.492} & 112.331 & 8.224 & 115.589 & \textbf{7.781} & 113.482 & \textbf{8.592} & 111.203 & \textbf{9.433} & 117.112 & \textbf{11.083} & 120.921 \\
    \bottomrule

    \end{tabular}
    \vspace{-2mm}
\end{table*}

Next, the benchmarks for systems~\eqref{eq:bench1} and~\eqref{eq:bench2} are presented (Table~\ref{tab:benchmarks}), with inputs constrained to $U \in [-10, 10]^2$. 
We evaluate \emph{discrete-time (DT)} and \emph{continuous-time (CT)} GBT variants of the planners, comparing both \emph{planning time} and \emph{solution cost}.  Two algorithmic families are considered: \texttt{RRT} and \texttt{SST}. The results highlight the trade-offs between DT and CT formulations across multiple environments.

Across both systems, a consistent trade-off emerges between the CT and DT planners. The latter variants typically exhibit lower planning times, reflecting the reduced computational burden introduced by time discretization, whereas the CT more frequently achieve lower trajectory costs, which can be attributed, in part, to a more accurate representation of the underlying dynamics. This behavior is consistent across both RRT and SST planners and is observed in all environments.

The plots in Fig.~\ref{fig:barplots} further highlight a clear trade-off between the planners. For this benchmark, control is restricted to $U = [-1,1]^{2}$, which induces effects similar to those observed when decreasing the DT time-step. This setting forces longer horizons to reach the goal. As a result, differences in planning time between variants become less pronounced, with instances in which the CT planner is both faster and produces lower-cost trajectories. This indicates that reducing the DT time-step alone is insufficient to recover the same behavior observed in the CT planner.

\begin{figure}[b!]
    \centering
    \includegraphics[width=1.0\linewidth]{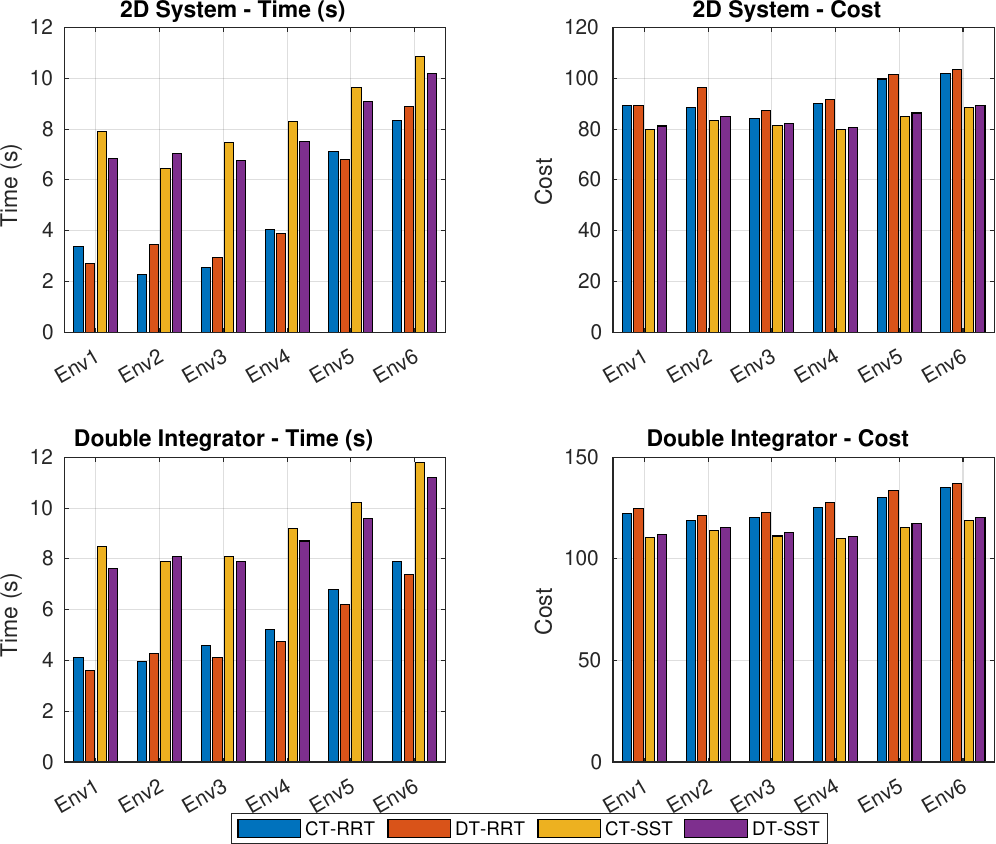}
    \caption{Comparing planning time and cost for \texttt{CT} vs. \texttt{DT} planners, with $U = [-1,1]^{2}$. The top and bottom plots depict results of the \texttt{RRT}- and \texttt{SST} planner, respectively.}
    \label{fig:barplots}
\end{figure}

\subsection{Monte-Carlo Simulations}
Next, to further emphasize the differences, we evaluate the planners under a Monte Carlo (MC) framework. In this setting, we generate 100 motion plans per environment and propagate the dynamics. For each roll-out, collision checking is performed along the resulting trajectory. We repeat this process for 5000 MC runs to obtain statistically significant estimates of performance. The resulting empirical probability of successfully avoiding obstacles and reaching the goal is computed accordingly.

Table~\ref{tab:mc_results} summarizes the results for both RRT- and SST-based implementations of CT-GBT and DT-GBT across four benchmark environments (Env. 3--6). The continuous-time variants achieve nearly perfect reliability, with success probabilities above 0.94 in all environments, including the most challenging narrow passage case. By contrast, the DT variants exhibit a sharp degradation in performance as the environment complexity increases: RRT drops from 0.65 in the empty case to only 0.06 in the most constrained environment, while SST shows a similar decline from 0.63 to 0.05. These results indicate that the CT formulation preserves feasibility under uncertainty and maintains robustness to noise, whereas the DT planner is prone to failure due to its optimistic treatment of dynamics and reduced accuracy in collision checking. Notably, the trends hold consistently across both RRT and SST planners, reinforcing the strength of the CT framework.

\subsection{Chance Constraint Evaluation}

Finally, to assess how well the planners respect probabilistic safety margins, we evaluate chance constraint violations under a similar MC framework. We propagate trajectories, sample at intermediate time instants between the discrete time steps and check whether the chance constraints are violated.  Table~\ref{tab:cc_results} reports the results across four benchmark environments (Env. 3--6). The CT formulations again demonstrate strong reliability, maintaining satisfaction probabilities in all cases. By contrast, the DT variants fail to enforce constraints as environment complexity increases, with satisfaction dropping to zero in the most constrained environment. These results highlight that CT-GBT not only preserves feasibility under uncertainty but also provides robust enforcement of chance constraints, while DT-GBT’s reduced accuracy leads to systematic violations.

\begin{table}[t!]
\centering
\caption{\small Empirical results for the probability of successfully avoiding obstacles and reaching the goal. 
}
   \label{tab:mc_results}
\begin{tabular}{@{}c|cccc@{}}
\toprule
 \texttt{Algorithm}  & $P_{s}$ Env. 3 & $P_{s}$ Env. 4 & $P_{s}$ Env. 5 & $P_{s}$ Env. 6 \\ \midrule
CT-GBT-RRT & \textbf{1.00}  & \textbf{1.00}  & \textbf{1.00}  & \textbf{0.95} \\
DT-GBT-RRT & 0.65 & 0.31 & 0.15 & 0.06\\
\midrule
CT-GBT-SST & \textbf{1.00}  & \textbf{1.00}  & \textbf{1.00} & \textbf{0.94}  \\
DT-GBT-SST & 0.63 &  0.22 & 0.12  & 0.05 \\ \bottomrule
\end{tabular}
\end{table}

\begin{table}[t!]
\centering
\caption{\small Empirical results for probability of chance-constraint satisfaction.
}
   \label{tab:cc_results}
\begin{tabular}{@{}c|cccc@{}}
\toprule
 \texttt{Algorithm} & $P_{s}$ Env. 3 & $P_{s}$ Env. 4 & $P_{s}$ Env. 5 & $P_{s}$ Env. 6 \\ \midrule
CT-GBT-RRT & \textbf{1.00}  & \textbf{0.98}  & \textbf{0.97}  & \textbf{0.91} \\
DT-GBT-RRT & 0.61 & 0.14 & 0.04 & 0.00\\
\midrule
CT-GBT-SST & \textbf{1.00}  & \textbf{0.98}  & \textbf{0.96} & \textbf{0.92}  \\
DT-GBT-SST & 0.59 & 0.11 & 0.03 & 0.00 \\ \bottomrule
\end{tabular}
    \vspace{-2mm}
\end{table}


\section{Conclusion}
\label{sec:conclusion}

This paper proposes a continuous-time Gaussian belief-space planning framework for sampling-based motion planning under process and measurement uncertainty. By modeling belief dynamics in continuous time and enforcing safety through belief barrier functions, the method reduces collision checking to efficient function evaluations while preserving algorithmic properties of the underlying planner. Results demonstrate that the continuous-time formulation achieves improved success rates and robust satisfaction of chance constraints across diverse benchmark environments compared to the discrete-time formulation, while being efficient. Overall, our methodology provides a principled foundation for safe and efficient continuous-time motion planning under uncertainty. Future work will extend this framework to richer task specifications such as signal temporal logic.


\bibliographystyle{IEEEtran}
\bibliography{bib, hybrid}

\end{document}